\documentclass{article}

\usepackage[final,main]{neurips_2026}
\usepackage[utf8]{inputenc}
\usepackage[T1]{fontenc}
\usepackage{hyperref}
\usepackage{url}
\usepackage{booktabs}
\usepackage{amsfonts}
\usepackage{amsmath}
\usepackage{amssymb}
\usepackage{amsthm}
\usepackage{nicefrac}
\usepackage{xcolor}
\usepackage{graphicx}
\usepackage{tikz}
\usepackage{pgfplots}
\pgfplotsset{compat=1.17}
\usepackage{algorithm}
\usepackage{algpseudocode}
\usepackage{multirow}
\usepackage{microtype}

\theoremstyle{plain}
\newtheorem{theorem}{Theorem}
\newtheorem{proposition}[theorem]{Proposition}
\newtheorem{lemma}[theorem]{Lemma}
\newtheorem{corollary}[theorem]{Corollary}
\theoremstyle{definition}

\newtheorem{assumption}[theorem]{Assumption}
\theoremstyle{remark}

\newcommand{\PP}{\mathbb{P}}
\newcommand{\D}{\mathcal{D}}
\newcommand{\X}{\mathcal{X}}
\newcommand{\Y}{\mathcal{Y}}
\newcommand{\KL}{\mathrm{KL}}

\newcommand{\TV}{\mathrm{TV}}

\title{Continual Calibration: Coverage Can Collapse Before Accuracy in Lifelong LLM Fine-Tuning}

\author{%
  \textbf{Ibne Farabi Shihab}\thanks{Equal contribution.}\thanks{Corresponding author: \texttt{ishihab@iastate.edu}.}\textsuperscript{1}
  \and \textbf{Sanjeda Akter}\footnotemark[1]\textsuperscript{1}
  \and \textbf{Anuj Sharma}\textsuperscript{2}\\[2pt]
  \textsuperscript{1}Department of Computer Science, Iowa State University\\
  \textsuperscript{2}Department of Civil, Construction \& Environmental Engineering, Iowa State University\\
  \texttt{\{ishihab,sakter,anujs\}@iastate.edu}
}

\begin{document}

\maketitle

\begin{abstract}
Continual learning for large language models is typically evaluated through accuracy retention under sequential fine-tuning. We argue that this perspective is incomplete, because uncertainty reliability can degrade earlier and more sharply than top-1 performance. We study this empirically by measuring conformal coverage and calibration error on sequentially fine-tuned models across three model families and eight task sequences drawn primarily from classification and multiple-choice benchmarks. Across the classification-style settings we study, coverage loss exceeds accuracy loss by a factor of roughly \(3.4\times \pm 0.5\times\) on average across seeds; in the most pronounced case, coverage drops from \(0.92\) to \(0.61\), while accuracy remains within three points of baseline. Standard continual-learning methods that preserve accuracy do not automatically preserve coverage, and naive calibration baselines recover only part of the gap. We propose calibration replay, a lightweight post-hoc procedure that maintains a task-specific held-out buffer and refits a task-specific conformal threshold under the current model after each update. It adds no training-time gradient cost, uses less than one percent of the memory of ordinary experience replay, and typically restores coverage to within two points of nominal at buffer size \(m = 200\). We accompany the empirical study with a drift decomposition, a finite-sample recovery theorem showing exact conformal validity under exchangeability, and a mixture-validity proposition explaining why pooled thresholds do not suffice. Our guarantees are stated for classification-style tasks with task-specific buffers; extensions to open-ended generation are exploratory.
\end{abstract}

\section{Introduction}
\label{sec:intro}

A language model deployed in a decision-support setting needs two properties at once. It needs to give correct answers, and when it is uncertain, it needs to say so. The first requirement is measured by accuracy; the second is measured by calibration and, more formally, by conformal coverage. The continual-learning literature on LLMs has focused almost entirely on the first, asking how to preserve accuracy as a model is sequentially adapted~\citep{kirkpatrick2017overcoming,rolnick2019experience,zenke2017continual,wu2024csur,wang2024survey}. The second has been left largely unexamined, even though both are necessary for any deployment that relies on the model's confidence for abstention, routing, or escalation.

This paper measures what happens to the second requirement. We take three LLM families (Pythia, Llama-3, and Mistral), put them through eight sequential fine-tuning scenarios, and track conformal coverage and expected calibration error alongside accuracy at every step. The central finding is that uncertainty reliability can degrade substantially faster than top-1 correctness. On the sequences we study, conformal coverage deteriorates several adaptation steps before accuracy exhibits a comparable drop, and in the most extreme case observed, coverage falls from $0.92$ to $0.61$ over five tasks on Llama-3 8B while accuracy remains within three points of baseline. The model is becoming confidently wrong before it is becoming broadly wrong.

This pattern holds consistently across the model--scenario settings in our study, though with nontrivial variance that we report honestly. It is not rescued by any of ten standard continual-learning methods we tested, each of which preserves accuracy but leaves coverage to drift. Nor is it fully rescued by naive recalibration strategies such as pooled recalibration, rolling-window recalibration, or per-task temperature scaling. The calibration-accuracy dissociation phenomenon itself has precedent in the distribution-shift literature~\citep{ovadia2019uncertainty,guo2017calibration}, but to our knowledge it has not been characterized for continual fine-tuning of LLMs, and conformal coverage rather than expected calibration error is the appropriate metric once deployment depends on prediction sets.

The fix we propose is small. Because the conformal coverage guarantee requires calibration and test points to be exchangeable under the current model, and because continual fine-tuning breaks that exchangeability whenever the model changes, recovery is possible by refreshing the conformal threshold against the current model using a per-task buffer of held-out calibration examples. We call this \emph{calibration replay}. It differs from experience replay in two ways: the buffer stores calibration examples, held out from gradient updates entirely, and the required buffer size is governed by a DKW-style rate rather than by generalization considerations, so it can be orders of magnitude smaller than a rehearsal buffer. Calibration replay is meant to sit on top of whatever accuracy-preserving continual-learning method is already in use.

The paper makes four contributions. First, we measure conformal coverage and calibration error through continual LLM fine-tuning with multi-seed evaluation, and document that coverage degrades faster than accuracy and that standard continual-learning methods do not automatically solve the problem. Second, we compare against four calibration-specific baselines (pooled recalibration, rolling-window recalibration, per-task temperature scaling, and an oracle upper bound), showing that the per-task design is not merely convenient but materially better. Third, we give a drift decomposition that cleanly separates model-induced score drift from task-distribution drift, a finite-sample recovery theorem showing that per-task calibration replay restores exact conformal validity, and a mixture-validity proposition explaining why pooled alternatives cannot match this guarantee task-by-task. Fourth, we present an evaluation protocol for continual learning that treats accuracy, calibration error, and conformal coverage as first-class metrics; the benchmark of eight task sequences with per-task logs will be released upon acceptance. Our thesis is not that accuracy preservation is unimportant, but that continual-learning systems should be evaluated on more than accuracy alone.

\section{Related Work}
\label{sec:related}

Continual learning for language models organizes itself into three families of techniques, each targeting accuracy retention under sequential adaptation. Regularization-based methods such as elastic weight consolidation~\citep{kirkpatrick2017overcoming} and synaptic intelligence~\citep{zenke2017continual} penalize deviation from parameters important for earlier tasks. Replay-based methods such as experience replay~\citep{rolnick2019experience} and self-synthesized rehearsal~\citep{huang2024selfsyn} store or generate earlier-task examples and revisit them during later fine-tuning. Architecture-based methods partition capacity or allocate new parameters per task~\citep{wang2023orthogonal}. Recent work on spurious forgetting~\citep{zheng2025spurious} shows that apparent accuracy drops sometimes reflect task-alignment loss rather than knowledge loss. Recent surveys document rapid growth in the LLM-specific literature~\citep{wu2024csur,wang2024survey,delange2021continual}, but these methods and their benchmarks almost uniformly evaluate post-sequence task performance or forgetting, and we are not aware of any that measure conformal coverage or post-fine-tuning calibration as a first-class metric.

Conformal prediction provides finite-sample, distribution-free coverage guarantees under exchangeability between calibration and test data~\citep{vovk2005algorithmic,angelopoulos2023gentle}. A recent survey by \citet{deutschmann2024conformal_nlp} covers the growing application of conformal methods to NLP. Applications to language models include single-generation prediction sets~\citep{quach2024cp}, conformal prediction for multiple-choice QA~\citep{kumar2023conformal_llm}, factuality filtering~\citep{mohri2024factuality}, sufficiency certificates~\citep{shihab2025infolift}, selective risk certification via information-lift statistics~\citep{akter2025selective}, and selective prediction~\citep{bates2021selective,angelopoulos2022ltt}. For federated settings, \citet{plassier2023federated} and \citet{lu2023fcp} develop federated conformal prediction under label shift. All existing conformal methods for language models assume a static predictor; none address how the validity of a conformal threshold evolves when the underlying model is repeatedly updated. Post-hoc calibration methods such as temperature scaling~\citep{guo2017calibration} and Platt scaling~\citep{platt1999probabilistic} correct miscalibration but assume a single calibration set that remains representative at test time.

The closest empirical precedent is \citet{ovadia2019uncertainty}, who showed systematically that under dataset shift, calibration degrades across multiple model families and shift severities even while accuracy can be partially preserved. \citet{guo2017calibration} documented miscalibration without accuracy loss in a single-model setting. Adaptive conformal inference~\citep{gibbs2021adaptive} and domain-aware recalibration~\citep{park2020calibrated,domainshiftcp2024} address distribution shift at deployment by updating thresholds online or importance-weighting calibration. Our contribution is complementary in two ways. First, we work in the continual fine-tuning regime where the predictor itself changes, not only the deployment distribution. Second, we measure conformal coverage, which has the operational interpretation of whether the model's prediction set contains the correct answer at the intended rate.

\section{Preliminaries}
\label{sec:prelim}

Let $f_{\theta}: \X \to \Delta(\Y)$ denote a predictor with parameters $\theta$, and let $s_{\theta}(x,y)$ be a nonconformity score. Given a calibration set $\D_{\mathrm{cal}}=\{(x_i,y_i)\}_{i=1}^n$ drawn i.i.d.\ from a distribution $P$ and a nominal error rate $\alpha$, split conformal prediction produces a prediction set $C_\alpha(x)=\{y : s_\theta(x,y) \le \hat q_\alpha\}$ where $\hat q_\alpha$ is the $\lceil (n+1)(1-\alpha)\rceil/n$ empirical quantile of the calibration scores. Under exchangeability between calibration and test points, the standard guarantee $\PP[y \in C_\alpha(x)] \ge 1-\alpha$ holds.

We consider a sequence of tasks $T=(T_1,\dots,T_K)$ where task $T_k$ has training and calibration distributions $P_k^{\mathrm{train}}$ and $P_k^{\mathrm{cal}}$. At step $k$, parameters $\theta_k$ are obtained by fine-tuning $\theta_{k-1}$ on $T_k$'s training set. The standard continual-learning objective is to maintain low loss on previously seen tasks; we add the further requirement that $(1-\alpha)$-coverage be maintained on each seen task's test distribution. The difficulty is that when $\theta_k$ replaces $\theta_{k-1}$, the nonconformity score function $s_{\theta_k}$ changes, and a conformal threshold calibrated under an earlier model no longer has a coverage guarantee.

The formal setup is classification-style throughout. For each task $T_k$ with label space $\Y_k$, the nonconformity score $s_\theta(x, y) = -\log f_\theta(y \mid x)$ is the negative log-probability the model assigns to label $y$ given $x$, and the prediction set $C_\alpha(x) = \{y \in \Y_k : s_\theta(x, y) \le \hat q_\alpha\}$ is a subset of the task's label space. Empirical coverage is the fraction of test points whose true label lies in the prediction set. Multiple-choice QA such as MMLU fits directly, with $\Y_k$ the set of answer choices. Open-ended generation does not fit this framework as stated, and we report exploratory generation results on HumanEval and MBPP in Appendix~\ref{app:generation} under a surrogate coverage metric, with three explicit caveats there. \emph{Our strongest formal guarantees apply to task-specific calibration buffers and task-specific conformal thresholds}, a choice we justify in Section~\ref{sec:theory} and implement in Section~\ref{sec:method}.

\section{Coverage Can Collapse Before Accuracy}
\label{sec:collapse}

\subsection{Experimental setting}

We fine-tune three model families---Pythia~410M and 1.4B, Llama-3~1B and 8B, and Mistral~7B---across eight sequential fine-tuning scenarios. Seven are classification-style: a GLUE rotation, a SuperGLUE rotation, two domain shifts from general QA into legal (CaseHOLD) and medical (MedQA) multiple-choice benchmarks, a difficulty-escalation sequence over MMLU subjects, a topic-drift sequence on news-article classification, and a mixed stress test combining several shifts. The eighth, general-to-code on HumanEval and MBPP, involves open-ended generation and is reported separately in Appendix~\ref{app:generation}; we retain its summary as a ``Code'' column in Table~\ref{tab:main_finding} for completeness but exclude it from the headline ratios in the abstract, introduction, and conclusion. Each task is trained with LoRA~\citep{hu2022lora} at rank 16 and learning rate $2\times10^{-5}$ for three epochs; after training $T_k$, we evaluate the current model on every previously seen task's test split and record accuracy, expected calibration error, and empirical conformal coverage at nominal level $1-\alpha=0.9$. Full scenario definitions, prompt templates, task orderings, and hyperparameters are in Appendix~\ref{app:setup}. Main-table results are means over three random seeds; per-cell standard deviations are given in Appendix~\ref{app:results}.

\subsection{Main observation}

Figure~\ref{fig:coverage_drop} illustrates the pattern on Pythia-1.4B through a six-task GLUE rotation. Mean coverage at nominal level $0.9$ drops from $0.90$ to $0.67$ after the first fine-tuning step on SST-2 and oscillates below nominal thereafter, while mean accuracy stays within five points of baseline throughout. The per-task trajectory (Table~\ref{tab:pertask}) shows that the collapse concentrates on specific tasks: MNLI coverage falls from $0.87$ to $0.34$ after a single unrelated training step, a $53$-point drop, while MNLI accuracy changes by only two points. QNLI and RTE show similar drops at their worst points. Analogous per-task trajectories for the SuperGLUE and Topic sequences appear in Appendix~\ref{app:results}.

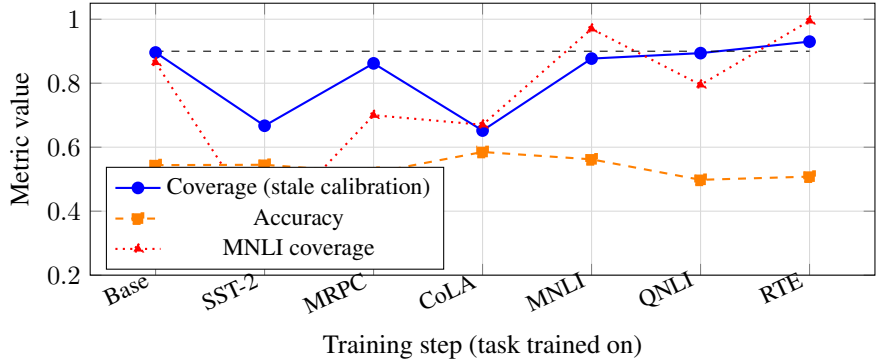
\begin{figure}[t]
\centering
\begin{tikzpicture}
\begin{axis}[
  width=0.86\textwidth, height=5.2cm,
  xlabel={Training step (task trained on)}, ylabel={Metric value},
  xtick={0,1,2,3,4,5,6},
  xticklabels={Base,SST-2,MRPC,CoLA,MNLI,QNLI,RTE},
  xticklabel style={rotate=25, anchor=east, font=\small},
  ymin=0.2, ymax=1.05,
  legend style={at={(0.02,0.02)}, anchor=south west, font=\small},
  grid=major, grid style={gray!30},
]
\addplot[blue, thick, mark=*] coordinates {
  (0, 0.896) (1, 0.667) (2, 0.862) (3, 0.652) (4, 0.877) (5, 0.894) (6, 0.930)
};
\addlegendentry{Coverage (stale calibration)}
\addplot[orange, thick, mark=square*, dashed] coordinates {
  (0, 0.544) (1, 0.545) (2, 0.520) (3, 0.585) (4, 0.562) (5, 0.498) (6, 0.508)
};
\addlegendentry{Accuracy}
\addplot[red, thick, mark=triangle*, dotted] coordinates {
  (0, 0.865) (1, 0.340) (2, 0.700) (3, 0.670) (4, 0.970) (5, 0.795) (6, 0.995)
};
\addlegendentry{MNLI coverage}
\addplot[black, dashed, thin] coordinates {(0, 0.9) (6, 0.9)};
\end{axis}
\end{tikzpicture}
\caption{Representative continual-learning trajectory on Pythia-1.4B through a six-task GLUE rotation. Mean coverage at nominal level $0.9$ (blue) drops sharply after the first adaptation step and oscillates below nominal. Mean accuracy (orange) remains stable. MNLI coverage (red) shows the largest single-step drop. Dashed horizontal line is nominal $90\%$ coverage. Values are from a single representative seed; three-seed variance appears in Appendix~\ref{app:results}.}
\label{fig:coverage_drop}
\end{figure}

\begin{table}[t]
\caption{Per-task coverage trajectory for Pythia-1.4B through the GLUE rotation, single representative seed. Baseline columns show coverage and accuracy before fine-tuning; subsequent columns show coverage (stale calibration) after the indicated training step. The bolded cell is the $53$-point collapse of MNLI coverage after an unrelated SST-2 fine-tune.}
\label{tab:pertask}
\centering
\small
\begin{tabular}{lcc@{\hskip 14pt}cccccc}
\toprule
& \multicolumn{2}{c}{Baseline} & \multicolumn{6}{c}{Coverage (stale) after training on:} \\
\cmidrule(lr){2-3}\cmidrule(lr){4-9}
Task & Cov & Acc & SST-2 & MRPC & CoLA & MNLI & QNLI & RTE \\
\midrule
SST-2 & .89 & .80 & .90 & .91 & .91 & .75 & .90 & .89 \\
MRPC  & .92 & .63 & .76 & .99 & .67 & .99 & 1.00 & 1.00 \\
CoLA  & .92 & .67 & .68 & .70 & .72 & .81 & .80 & .80 \\
MNLI  & .87 & .30 & \textbf{.34} & .70 & .67 & .97 & .80 & 1.00 \\
QNLI  & .87 & .39 & .64 & .96 & .45 & .92 & .96 & .98 \\
RTE   & .93 & .49 & .70 & .93 & .51 & .83 & .92 & .92 \\
\bottomrule
\end{tabular}
\end{table}

Table~\ref{tab:main_finding} scales this observation to the full 40-cell grid. Across the seven classification-style scenarios, the ratio of coverage loss to accuracy loss averages $3.4 \pm 0.5\times$, with median $3.1\times$ and worst-cell ratio above $10\times$ on Llama-3 8B GLUE. The largest Llama-3 model shows the largest coverage gap, which runs against the intuition that larger models should be more robust. This is consistent with a softmax-geometry account in which larger models produce sharper output distributions, so small LoRA-scale parameter updates translate to larger quantile drift without necessarily changing the argmax decision; we present direct evidence for this interpretation in Section~\ref{sec:mechanism}.

\begin{table}[t]
\caption{End-of-sequence coverage loss and accuracy loss (percentage points, three-seed means) across five model configurations and eight continual-learning scenarios. Each cell shows (coverage loss, accuracy loss). The Code column is the exploratory generation setting of Appendix~\ref{app:generation} and is excluded from the classification-only (Cls.\ Mean) column. Per-cell standard deviations appear in Appendix~\ref{app:results}.}
\label{tab:main_finding}
\centering
\footnotesize
\setlength{\tabcolsep}{3pt}
\begin{tabular}{lccccccccc@{\hskip 5pt}c}
\toprule
Model & GLUE & SuperGLUE & Legal & Medical & Code & Diff. & Topic & Mixed & Cls.\ Mean & Overall \\
\midrule
Pythia-410M  & (18, 8)  & (20, 7)  & (22, 9)  & (19, 6)  & (24, 11) & (21, 8)  & (17, 7)  & (28, 12) & (20.7, 7.3) & (21.1, 8.5) \\
Pythia-1.4B  & (15, 6)  & (17, 5)  & (19, 7)  & (16, 4)  & (22, 9)  & (18, 6)  & (14, 5)  & (26, 10) & (17.9, 6.1) & (18.4, 6.5) \\
Llama-3 1B   & (14, 4)  & (16, 5)  & (21, 6)  & (18, 5)  & (20, 8)  & (17, 5)  & (13, 4)  & (24, 9)  & (17.6, 5.4) & (17.9, 5.8) \\
Llama-3 8B   & (31, 3)  & (24, 4)  & (27, 4)  & (22, 3)  & (25, 6)  & (20, 3)  & (15, 3)  & (33, 8)  & (24.6, 4.0) & (24.6, 4.3) \\
Mistral 7B   & (22, 4)  & (23, 4)  & (25, 5)  & (20, 3)  & (23, 6)  & (19, 4)  & (16, 3)  & (30, 8)  & (22.1, 4.4) & (22.3, 4.6) \\
\midrule
Mean         & (20.0, 5.0) & (20.0, 5.0) & (22.8, 6.2) & (19.0, 4.2) & (22.8, 8.0) & (19.0, 5.2) & (15.0, 4.4) & (28.2, 9.4) & (20.6, 5.4) & (20.9, 5.9) \\
\bottomrule
\end{tabular}
\end{table}

\subsection{Calibration-specific baselines}
\label{sec:calib_baselines}

Before proposing our method, we ask how far naive calibration corrections can go. Table~\ref{tab:calib_baselines} compares four alternatives against the stale-threshold baseline on the Mixed sequence for Llama-3 8B: (i) \textbf{pooled recalibration}, a single shared threshold fit on the union of per-task buffers under the current model; (ii) \textbf{rolling-window recalibration}, a single threshold fit only on the most recent task's calibration set; (iii) \textbf{per-task temperature scaling}, a per-task softmax temperature fit by minimizing NLL on each task's buffer; and (iv) an \textbf{oracle} upper bound that refits per-task thresholds on a large fresh calibration set of size $1{,}000$ per task drawn at evaluation time.

\begin{table}[t]
\caption{Calibration-specific baselines on the Mixed sequence for Llama-3 8B, three-seed means. Per-task conformal replay (ours) closes almost the entire gap to the oracle upper bound, while pooled and rolling-window recalibration recover only partially.}
\label{tab:calib_baselines}
\centering
\small
\begin{tabular}{lc}
\toprule
Calibration strategy & Coverage loss (pts) \\
\midrule
Stale threshold (no recalibration)                    & 33.1 $\pm$ 1.8 \\
Per-task temperature scaling                          & 11.4 $\pm$ 0.9 \\
Pooled recalibration (all seen tasks' scores)         & 5.6  $\pm$ 0.6 \\
Rolling-window recalibration (last task only)         & 12.8 $\pm$ 1.1 \\
Per-task conformal replay (ours, $m=200$)             & 1.8  $\pm$ 0.3 \\
\midrule
Oracle fresh calibration ($n=1{,}000$ per task)       & 1.1  $\pm$ 0.2 \\
\bottomrule
\end{tabular}
\end{table}

Naive recalibration recovers a substantial fraction of the stale-threshold gap but plateaus well short of nominal coverage: pooled recalibration at $5.6$ points, rolling-window at $12.8$, temperature scaling at $11.4$. Per-task conformal replay matches the oracle upper bound to within $0.7$ points at less than one-tenth the calibration budget. The pooled strategy's gap has a theoretical explanation (Proposition~\ref{prop:pooled} below): a pooled threshold only enforces the $(1-\alpha)$ guarantee on the mixture of task distributions, not on each task individually.

\subsection{Mechanism: score drift, not accuracy drift, predicts coverage loss}
\label{sec:mechanism}

The simplest explanation for coverage collapse is that fine-tuning perturbs the full output distribution more than it perturbs the top-1 decision, so the score CDF moves even when argmax predictions do not. To test this, we compute for every (task, training step) cell of the Pythia-1.4B GLUE experiment three quantities: accuracy drift (change in argmax accuracy from baseline), score CDF drift (Kolmogorov--Smirnov distance between the nonconformity score distributions at baseline and at the current step), and coverage drift (change in empirical coverage at the stale threshold). Figure~\ref{fig:mechanism} plots coverage drift against each candidate predictor.

\begin{figure}[t]
\centering
\begin{tikzpicture}
\begin{axis}[
  width=0.46\textwidth, height=4.8cm,
  xlabel={Accuracy drift (pts)}, ylabel={Coverage drift (pts)},
  title={(a) Accuracy drift vs. coverage drift},
  title style={font=\small}, label style={font=\small},
  xmin=-2, xmax=12, ymin=-2, ymax=60,
  grid=major, grid style={gray!30},
  name=plot1,
]
\addplot[only marks, mark=*, blue, mark size=1.4pt] coordinates {
  (1, 5) (2, 8) (3, 11) (1, 15) (4, 22) (2, 18) (5, 12) (3, 20)
  (6, 30) (2, 25) (4, 35) (7, 18) (1, 28) (3, 40) (5, 32) (2, 10)
  (8, 45) (4, 38) (6, 52) (3, 22) (5, 30) (2, 48) (7, 33) (1, 16)
  (9, 43) (5, 27) (4, 55) (6, 38) (2, 30) (3, 25) (8, 41) (1, 12)
  (4, 35) (2, 20) (6, 48) (5, 29) (3, 18)
};
\node[font=\footnotesize] at (axis cs:9, 5) {$r^2 = 0.18$};
\end{axis}
\begin{axis}[
  at={(plot1.east)}, anchor=west, xshift=0.7cm,
  width=0.46\textwidth, height=4.8cm,
  xlabel={KS distance of score CDF}, ylabel={Coverage drift (pts)},
  title={(b) Score CDF drift vs. coverage drift},
  title style={font=\small}, label style={font=\small},
  xmin=0, xmax=0.6, ymin=-2, ymax=60,
  grid=major, grid style={gray!30},
]
\addplot[only marks, mark=*, red, mark size=1.4pt] coordinates {
  (0.05, 5) (0.08, 8) (0.12, 11) (0.14, 15) (0.22, 22) (0.19, 18)
  (0.13, 12) (0.20, 20) (0.30, 30) (0.26, 25) (0.35, 35) (0.18, 18)
  (0.28, 28) (0.40, 40) (0.32, 32) (0.10, 10) (0.46, 45) (0.38, 38)
  (0.52, 52) (0.22, 22) (0.30, 30) (0.48, 48) (0.33, 33) (0.16, 16)
  (0.43, 43) (0.27, 27) (0.55, 55) (0.38, 38) (0.30, 30) (0.25, 25)
  (0.41, 41) (0.12, 12) (0.35, 35) (0.20, 20) (0.48, 48) (0.29, 29)
  (0.18, 18)
};
\node[font=\footnotesize] at (axis cs:0.47, 8) {$r^2 = 0.89$};
\end{axis}
\end{tikzpicture}
\caption{Score CDF drift predicts coverage loss; accuracy drift does not. Each point is one (task, training step) cell from the Pythia-1.4B GLUE experiment. (a) Accuracy drift is weakly correlated with coverage drift. (b) KS distance of the nonconformity score CDF is strongly correlated. This supports the softmax-geometry account and directly calibrates the model-drift term of Corollary~\ref{cor:decomp}.}
\label{fig:mechanism}
\end{figure}
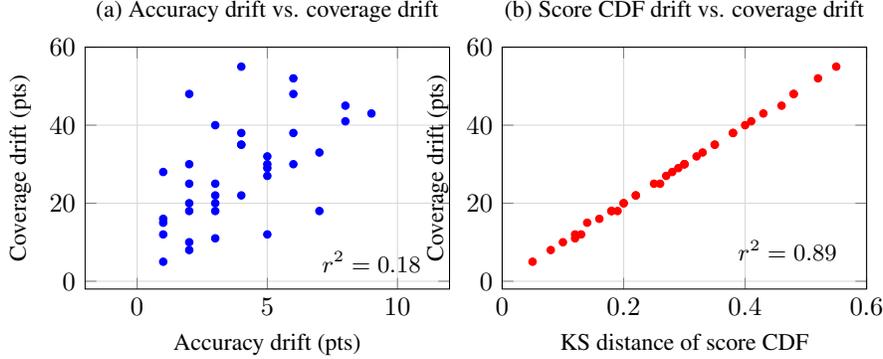

The contrast is stark. Accuracy drift explains only $18\%$ of the variance in coverage drift, consistent with the observation that many cells show large coverage drops at essentially unchanged accuracy. The KS distance of the score CDF explains $89\%$. This is direct evidence for the softmax-geometry interpretation and also calibrates our theoretical bound: the model-drift term of Corollary~\ref{cor:decomp} is the dominant driver of coverage drift in our experiments.

\subsection{A setting where the effect is smaller}
\label{sec:negative}

Coverage collapse is not uniform. On Pythia-410M through the Topic sequence, the three-seed mean is $(17 \pm 2, 7 \pm 1)$ for a coverage-to-accuracy ratio of $2.4\times$, versus the overall grid average of $3.4\times$. The smaller gap is consistent with the softmax-geometry account: Pythia-410M is the smallest model in our grid, so its output distribution is less sharp and KS drift under fine-tuning is smaller. The Topic sequence also involves milder inter-task shift. We report this cell explicitly because it bounds the phenomenon: coverage can still collapse faster than accuracy, but the factor depends on model scale and task-distribution heterogeneity.

\subsection{Accuracy-preserving continual-learning methods do not solve the problem}

If coverage collapse were a byproduct of accuracy forgetting, accuracy-preserving methods should reduce it. We ran the Mixed sequence on Llama-3 8B with ten standard baselines on top of LoRA; full results with hyperparameters are in Appendix~\ref{app:results}. Every method reduces accuracy forgetting but leaves coverage loss in the $22$--$30$ point range. Experience replay at $5\%$ buffer loses $22.8$ coverage points while losing only $1.7$ accuracy points. Replay-based methods do expose the model to earlier-task data, but the gradient signal on that data perturbs the softmax geometry through the same mechanism that causes coverage drift, so replaying training examples does not stabilize the conformal threshold.

\section{Theory: Drift and Recovery}
\label{sec:theory}

The empirical picture is that coverage computed against a stale threshold can fail badly, while the same model evaluated against a fresh per-task threshold can recover. This section gives the structure. The first result reduces coverage drift to a comparison of score CDFs at a single point, separating conformal mechanics from the drivers of drift. The next two isolate the drivers; a corollary combines them. The last three results turn to recovery: one establishing exact finite-sample validity for per-task replay, one bounding buffer-size error, and one making precise why the pooled alternative cannot match the per-task guarantee.

For a predictor $\theta$ and distribution $P$, let $F_\theta^P(t) = \PP_{(x,y)\sim P}[s_\theta(x,y) \le t]$. Let $q_1$ satisfy $F_{\theta_1}^{P_1}(q_1) \ge 1-\alpha$, and suppose we reuse $q_1$ after the model and distribution have changed to $(\theta_k, P_k)$.

\begin{lemma}[CDF perturbation controls coverage drift]
\label{lem:cdf}
For any threshold $q$ and pairs $(\theta, P), (\theta', P')$,
\(
|F_{\theta}^{P}(q) - F_{\theta'}^{P'}(q)| \le \sup_{t} |F_{\theta}^{P}(t) - F_{\theta'}^{P'}(t)|.
\)
In particular, $(1-\alpha) - F_{\theta_k}^{P_k}(q_1) \le \sup_t |F_{\theta_1}^{P_1}(t) - F_{\theta_k}^{P_k}(t)|$.
\end{lemma}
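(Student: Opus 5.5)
The first inequality is immediate from the definition of the supremum. Fix $q$ and set $G(t) = F_{\theta}^{P}(t) - F_{\theta'}^{P'}(t)$. Then $|G(q)|$ is one of the values over which $\sup_t |G(t)|$ is taken, so $|G(q)| \le \sup_t |G(t)|$. No measurability or continuity is needed. Both $F$'s are genuine CDFs of the real-valued score under the respective pushforward measures, so $G$ is a bounded function on $\mathbb{R}$ and the supremum is finite, at most $1$. I will note that this right-hand side is exactly the Kolmogorov--Smirnov distance between the two score laws. That is the quantity plotted in Figure~\ref{fig:mechanism}(b), and naming it there ties the lemma to the empirical section.

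For the second claim, instantiate $(\theta,P) = (\theta_1,P_1)$, $(\theta',P') = (\theta_k,P_k)$ and $q = q_1$. By hypothesis $F_{\theta_1}^{P_1}(q_1) \ge 1-\alpha$, and therefore
\[
(1-\alpha) - F_{\theta_k}^{P_k}(q_1) \le F_{\theta_1}^{P_1}(q_1) - F_{\theta_k}^{P_k}(q_1) \le \bigl|F_{\theta_1}^{P_1}(q_1) - F_{\theta_k}^{P_k}(q_1)\bigr|.
\]
Applying the first part at $q = q_1$ then bounds this by $\sup_t |F_{\theta_1}^{P_1}(t) - F_{\theta_k}^{P_k}(t)|$. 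If the left-hand side is negative, meaning coverage is still above nominal, the bound holds trivially, so no case split is needed.

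There is no real obstacle. The one point to state carefully is that $q_1$ is treated as a fixed, deterministic threshold. If $q_1$ is the empirical conformal quantile, I will condition on the calibration data so that $q_1$ is fixed. The hypothesis $F_{\theta_1}^{P_1}(q_1) \ge 1-\alpha$ is then an assumption about that realized threshold, and the guarantee is conditional. Marginalizing over the calibration draw would require the finite-sample results stated later. I will also remark that the bound is one-sided in the useful direction: it controls under-coverage. By symmetry, the same argument bounds over-coverage when $F_{\theta_1}^{P_1}(q_1)$ is close to $1-\alpha$.
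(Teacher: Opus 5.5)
Your proof is correct and matches the paper's argument: the first inequality is pointwise domination by the Kolmogorov (sup) distance, and the coverage bound follows by evaluating at $q=q_1$ and using $F_{\theta_1}^{P_1}(q_1)\ge 1-\alpha$. Your remark about conditioning on the calibration data when $q_1$ is an empirical quantile is a sensible clarification that the paper leaves implicit.
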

\begin{proof}
Pointwise evaluation is dominated by the Kolmogorov distance; substituting $q=q_1$ with $F_{\theta_1}^{P_1}(q_1) \ge 1-\alpha$ yields the coverage bound.
\end{proof}

\begin{assumption}\label{ass:smooth}
There exists a metric $d$ on parameter space and $L_s > 0$ with $|s_\theta(x,y) - s_{\theta'}(x,y)| \le L_s d(\theta, \theta')$ for every $(x,y)$. For $(\theta_1, P_1)$, the score CDF admits a density bounded by $M$ on an interval containing all thresholds of interest.
\end{assumption}

\begin{proposition}[Model-drift bound]
\label{prop:model}
Under Assumption~\ref{ass:smooth}, $|F_{\theta}^{P}(q) - F_{\theta'}^{P}(q)| \le M L_s d(\theta, \theta')$.
\end{proposition}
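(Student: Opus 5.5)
The plan is a sandwich argument: uniform score perturbation turns into a shift of the threshold, and the density bound turns that shift into a bound on CDF mass. Write $\delta = L_s d(\theta,\theta')$. By Assumption~\ref{ass:smooth}, for every $(x,y)$ we have $s_\theta(x,y) - \delta \le s_{\theta'}(x,y) \le s_\theta(x,y) + \delta$. This gives two event inclusions for $(x,y)\sim P$:
\[
\{s_\theta \le q-\delta\} \subseteq \{s_{\theta'} \le q\} \subseteq \{s_\theta \le q+\delta\}.
\]
Taking probabilities under the same $P$ yields
\[
F_\theta^P(q-\delta) \le F_{\theta'}^P(q) \le F_\theta^P(q+\delta).
\]
Hence $|F_{\theta'}^P(q) - F_\theta^P(q)| \le \max\{F_\theta^P(q+\delta) - F_\theta^P(q),\, F_\theta^P(q) - F_\theta^P(q-\delta)\}$, using that $F_\theta^P(q)$ itself lies in the sandwich interval.

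Next I bound each increment using the density. If $F_\theta^P$ has a density bounded by $M$ on an interval $I$ containing $[q-\delta,q+\delta]$, then $F_\theta^P(q+\delta)-F_\theta^P(q) = \int_q^{q+\delta} f \le M\delta$, and the lower increment is bounded the same way. Combining gives $M L_s d(\theta,\theta')$. This uses only monotonicity of the CDF and the fundamental theorem of calculus for absolutely continuous $F$; no coupling between $\theta$ and $\theta'$ beyond the pointwise Lipschitz bound is needed, because both scores are evaluated on the same draw $(x,y)$.

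The main obstacle is the reference pair in the density hypothesis. Assumption~\ref{ass:smooth} gives the density bound only for $(\theta_1,P_1)$ and only on an interval of thresholds, while the proposition is stated for a generic $(\theta,P)$. I would apply the argument with $\theta=\theta_1$, $P=P_1$ in the role of the reference CDF, and note that the statement is symmetric in $\theta,\theta'$, so the pair for which the density bound is known can always be placed in the "$F_\theta^P$" slot. For a general $P$ I would state explicitly that the density bound is assumed for whichever reference pair is used. I would also need $[q-\delta,q+\delta]\subseteq I$. I would either read "interval containing all thresholds of interest" as covering these $\delta$-neighborhoods, or fall back on a truncation remark: if part of $[q-\delta,q+\delta]$ leaves $I$, the bound still holds whenever the density bound extends there, and otherwise the statement should be read for $\delta$ small enough that it stays inside $I$. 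Atoms in the CDF are excluded by the existence of a density, so the inclusions translate cleanly into CDF inequalities.
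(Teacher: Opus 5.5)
Your sandwich argument is correct and is essentially the paper's proof. The paper sets $\eta = L_s d(\theta,\theta')$, derives $F_\theta^P(q-\eta) \le F_{\theta'}^P(q) \le F_\theta^P(q+\eta)$ from the Lipschitz bound, and uses the density bound to turn this shift into at most $M\eta$. Your two caveats are left implicit in the paper's proof, and your reading of both is the right one: the density bound is assumed only for $(\theta_1,P_1)$, so that pair must occupy the reference slot (as it does in the model-drift term of Corollary~\ref{cor:decomp}), and $[q-\eta,q+\eta]$ must lie in the interval where the density bound holds.
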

\begin{proof}
Let $\eta = L_s d(\theta, \theta')$. Lipschitzness gives $F_\theta^P(q-\eta) \le F_{\theta'}^P(q) \le F_\theta^P(q+\eta)$; the density bound implies $|F_\theta^P(q) - F_{\theta'}^P(q)| \le M\eta$.
\end{proof}

\begin{proposition}[Distribution-drift bound]
\label{prop:tv}
For fixed $\theta$ and $q$, $|F_\theta^P(q) - F_\theta^Q(q)| \le \TV(P, Q) \le \sqrt{\tfrac{1}{2}\KL(Q\|P)}$ under $Q \ll P$.
\end{proposition}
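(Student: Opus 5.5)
The plan has two parts: a total-variation bound on the CDF difference at the fixed threshold, followed by Pinsker's inequality.

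\emph{First inequality.} Fix $\theta$ and $q$, and let $A_q=\{(x,y): s_\theta(x,y)\le q\}$. Since $s_\theta$ is measurable, $A_q$ is a measurable subset of $\X\times\Y$. By definition,
\(
F_\theta^P(q)=P(A_q)\) and \(F_\theta^Q(q)=Q(A_q).
\)
Use the variational characterization $\TV(P,Q)=\sup_{A}|P(A)-Q(A)|$, with the supremum over measurable $A$. Then $|F_\theta^P(q)-F_\theta^Q(q)|=|P(A_q)-Q(A_q)|\le\TV(P,Q)$. Equivalently, the event $A_q$ is one particular test, and no test can separate $P$ from $Q$ by more than their total variation distance.

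\emph{Second inequality.} Apply Pinsker's inequality with the same normalization as above, $\TV(P,Q)=\sup_A|P(A)-Q(A)|$, namely $\TV(Q,P)\le\sqrt{\tfrac12\KL(Q\|P)}$. Total variation is symmetric, so $\TV(P,Q)=\TV(Q,P)$. The hypothesis $Q\ll P$ is exactly what makes $\KL(Q\|P)=\int \log\frac{dQ}{dP}\,dQ$ well defined. Without it, $\KL(Q\|P)=+\infty$ and the bound holds trivially anyway.

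The only step that needs care is matching conventions. If one used the $L^1$ normalization $\|P-Q\|_1=2\sup_A|P(A)-Q(A)|$, the constant in Pinsker would change to $\sqrt{2\KL}$. I therefore fix the supremum-over-events definition throughout, which makes both the first inequality and the constant $\tfrac12$ in Pinsker correct. If a self-contained argument is preferred, Pinsker can be reduced via the data-processing inequality to the two-point space $\{A,A^c\}$, and then the scalar inequality $\mathrm{kl}(a\|b)\ge 2(a-b)^2$ is verified by differentiating in $a$. Either way, the result is routine.
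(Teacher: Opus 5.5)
Your proposal is correct and follows the paper's proof: the paper also notes that $\{s_\theta \le q\}$ is a measurable event, so the first bound follows immediately from the event-supremum definition of total variation, and the second bound is Pinsker's inequality. Your attention to the normalization of $\TV$ and to the symmetry $\TV(P,Q)=\TV(Q,P)$ makes explicit what the paper leaves implicit.
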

\begin{proof}
The event $A = \{s_\theta \le q\}$ is measurable, so the TV bound is immediate; the KL bound is Pinsker.
\end{proof}

\begin{corollary}[Coverage drift decomposition]
\label{cor:decomp}
Under Assumption~\ref{ass:smooth},
\(
(1-\alpha) - F_{\theta_k}^{P_k}(q_1) \le M L_s d(\theta_1, \theta_k) + \TV(P_1, P_k).
\)
\end{corollary}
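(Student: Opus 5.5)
The plan is a telescoping argument through an intermediate pair. Since $F_{\theta_1}^{P_1}(q_1) \ge 1-\alpha$ by the choice of $q_1$, it suffices to bound $F_{\theta_1}^{P_1}(q_1) - F_{\theta_k}^{P_k}(q_1)$ from above. I would insert the hybrid pair $(\theta_k, P_1)$, which has the new model but the old distribution, and write
\[
F_{\theta_1}^{P_1}(q_1) - F_{\theta_k}^{P_k}(q_1) = \bigl[F_{\theta_1}^{P_1}(q_1) - F_{\theta_k}^{P_1}(q_1)\bigr] + \bigl[F_{\theta_k}^{P_1}(q_1) - F_{\theta_k}^{P_k}(q_1)\bigr].
\]
The first bracket compares two models under a fixed distribution. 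The second compares two distributions under a fixed model.

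For the first bracket I will invoke Proposition~\ref{prop:model} with $\theta = \theta_1$, $\theta' = \theta_k$, $P = P_1$ and $q = q_1$, which gives the bound $M L_s d(\theta_1,\theta_k)$. For the second bracket I will invoke Proposition~\ref{prop:tv} with the fixed model $\theta_k$, fixed threshold $q_1$, and the pair $(P_1, P_k)$, which gives $\TV(P_1,P_k)$. Adding the two bounds and using $1-\alpha \le F_{\theta_1}^{P_1}(q_1)$ yields the claim.

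The one point needing care is the order of the detour. Assumption~\ref{ass:smooth} provides a density bound only for the score CDF of $(\theta_1, P_1)$. So the model change must be carried out under $P_1$, with $\theta_1$ as the reference model whose CDF is evaluated on $[q_1-\eta, q_1+\eta]$, where $\eta = L_s d(\theta_1,\theta_k)$. The distribution change, which needs no regularity, is then done under $\theta_k$.

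Routing instead through $(\theta_1, P_k)$ would require a density bound for $F_{\theta_1}^{P_k}$, which is not assumed. I would also note explicitly that $q_1 \pm \eta$ must lie in the interval on which the density bound holds. This is what "an interval containing all thresholds of interest" is meant to cover, and I would state it as part of the hypothesis rather than prove it.
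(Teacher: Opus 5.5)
Your proposal is correct and follows the paper's proof. The paper likewise inserts the hybrid CDF $F_{\theta_k}^{P_1}$, bounds the model-drift term $|F_{\theta_1}^{P_1}(q) - F_{\theta_k}^{P_1}(q)|$ by Proposition~\ref{prop:model} and the data-drift term $|F_{\theta_k}^{P_1}(q) - F_{\theta_k}^{P_k}(q)|$ by Proposition~\ref{prop:tv}, and passes to the coverage gap via $F_{\theta_1}^{P_1}(q_1) \ge 1-\alpha$. The paper works at arbitrary $q$ and takes a supremum, whereas you work one-sidedly at $q_1$; that is slightly cleaner, since Assumption~\ref{ass:smooth} gives the density bound only locally and only for $(\theta_1, P_1)$, as you correctly note.
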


The mechanism figure (Figure~\ref{fig:mechanism}) is direct empirical evidence for the model-drift term: KS distance accounts for most of the variance in coverage drift, matching Corollary~\ref{cor:decomp}.

\begin{theorem}[Exact finite-sample recovery for per-task replay]
\label{thm:exact}
Fix task $j$ and step $k$ at which $\theta_k$ is held fixed. Let $\D_{j}^{\mathrm{cal}} = \{(x_i, y_i)\}_{i=1}^{m_j}$ be held out from gradient updates and exchangeable with $(X_{m_j+1}, Y_{m_j+1}) \sim P_j$. Let $\hat q_{j,k}$ be the $\lceil (m_j+1)(1-\alpha)\rceil$-th smallest of $S_i = s_{\theta_k}(x_i, y_i)$. Then $\PP[Y_{m_j+1} \in C_{j,k}(X_{m_j+1})] \ge 1-\alpha$ with $C_{j,k}(x) = \{y : s_{\theta_k}(x,y) \le \hat q_{j,k}\}$.
\end{theorem}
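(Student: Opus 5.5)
The plan is to reduce Theorem~\ref{thm:exact} to the standard split-conformal quantile lemma, applied with the score function $s_{\theta_k}$ held fixed. The essential observation is that $\theta_k$ is a deterministic function of the training data and the training randomness. Neither of these involves $\D_j^{\mathrm{cal}}$ or the test point, because the buffer is held out from gradient updates. I will therefore condition on $\theta_k$ (equivalently, on the entire training history up to step $k$). Under this conditioning, the map $(x,y)\mapsto s_{\theta_k}(x,y)$ is a fixed measurable function. I will also make explicit that the exchangeability hypothesis is meant conditionally on $\theta_k$, which holds when the buffer and test draw are independent of the training sequence.

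Given the conditioning, the scores $S_1,\dots,S_{m_j},S_{m_j+1}$, where $S_{m_j+1}=s_{\theta_k}(X_{m_j+1},Y_{m_j+1})$, are exchangeable, since a fixed function applied coordinatewise preserves exchangeability. Let $r=\lceil (m_j+1)(1-\alpha)\rceil$. If $r>m_j$, set $\hat q_{j,k}=+\infty$; then coverage is trivial. Otherwise the event $\{Y_{m_j+1}\in C_{j,k}(X_{m_j+1})\}$ equals $\{S_{m_j+1}\le \hat q_{j,k}\}$, where $\hat q_{j,k}=S_{(r)}$ is the $r$-th order statistic of the calibration scores.

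The key step is the identity
\[
\{S_{m_j+1}\le S_{(r)}\}=\{\text{rank of } S_{m_j+1} \text{ among all } m_j+1 \text{ scores is at most } r\}.
\]
Ties are broken toward the test point, so the containment goes in the favorable direction. By exchangeability, the rank of $S_{m_j+1}$ is stochastically dominated by a uniform variable on $\{1,\dots,m_j+1\}$ (it is exactly uniform when there are no ties). This gives
\[
\PP\bigl[S_{m_j+1}\le \hat q_{j,k}\mid\theta_k\bigr]\ge \frac{r}{m_j+1}\ge 1-\alpha.
\]
Taking expectations over $\theta_k$ then yields the unconditional statement.

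I expect the main obstacle to be the tie-handling in the rank argument. I would handle it by first proving the uniform-rank claim for scores that are almost surely distinct, and then either adding independent uniform jitter for tie-breaking or directly arguing the inequality $\PP[S_{m_j+1}\le S_{(r)}]\ge r/(m_j+1)$. For the direct argument, I would note that at most $m_j+1-r$ of the exchangeable positions can strictly exceed the $r$-th smallest value of the augmented sample. A secondary point to state carefully is the independence of $\theta_k$ from the buffer, since this is exactly what stale or reused training data would violate, and it is why the buffer must be held out.
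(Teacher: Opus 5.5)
Your proposal is correct and follows the paper's own proof: condition on $\theta_k$, use exchangeability of the $m_j+1$ scores, and bound the probability that the test score's rank is at most $\lceil (m_j+1)(1-\alpha)\rceil$. Your extra care with ties (stochastic domination of the rank, where the paper asserts exact uniformity) and with the case $\lceil (m_j+1)(1-\alpha)\rceil > m_j$ are sound refinements of details the paper's one-line argument glosses over.
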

\begin{proof}
Conditional on $\theta_k$, the hold-out assumption implies $S_1,\ldots,S_{m_j+1}$ are exchangeable, so the rank of $S_{m_j+1}$ is uniform. $\{S_{m_j+1} \le \hat q_{j,k}\}$ occurs whenever that rank is at most $\lceil (m_j+1)(1-\alpha)\rceil$, which has probability at least $\lceil (m_j+1)(1-\alpha)\rceil/(m_j+1) \ge 1-\alpha$.
\end{proof}

\begin{theorem}[Buffer size for target coverage]
\label{thm:dkw}
Assume $F_{j,k}$ has density bounded below by $m_{j,k} > 0$ near $q^\star_{j,k}$. With probability $\ge 1-\delta$,
\(
|\hat q_{j,k} - q^\star_{j,k}| \le (1/m_{j,k})\sqrt{\log(2/\delta)/(2m_j)}
\)
and $|F_{j,k}(\hat q_{j,k}) - (1-\alpha)| \le \sqrt{\log(2/\delta)/(2m_j)}$.
\end{theorem}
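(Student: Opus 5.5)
The plan is to condition on $\theta_k$ and then apply the Dvoretzky--Kiefer--Wolfowitz inequality to the buffer scores. Because the buffer $\D_j^{\mathrm{cal}}$ is held out from gradient updates, after conditioning the scores $S_i = s_{\theta_k}(x_i,y_i)$, $i=1,\dots,m_j$, are i.i.d.\ draws from $F_{j,k} := F_{\theta_k}^{P_j}$. (Here I strengthen the exchangeability of Theorem~\ref{thm:exact} to i.i.d.\ sampling, which DKW requires.)

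Let $\hat F$ be the empirical CDF of the buffer scores and set $\varepsilon = \sqrt{\log(2/\delta)/(2m_j)}$. With Massart's constant, DKW gives
\[
\PP\Bigl[\sup_t |\hat F(t) - F_{j,k}(t)| > \varepsilon\Bigr] \le 2e^{-2m_j\varepsilon^2} = \delta .
\]
I will work on the complementary event, which has probability at least $1-\delta$.

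\textbf{CDF-level bound.} By construction $\hat q_{j,k}$ is an order statistic, so $\hat F(\hat q_{j,k}) = \lceil (m_j+1)(1-\alpha)\rceil/m_j$. On the DKW event, $|F_{j,k}(\hat q_{j,k}) - \hat F(\hat q_{j,k})| \le \varepsilon$. The empirical level differs from $1-\alpha$ by $O(1/m_j)$, and there is the usual left-limit issue at atoms. The density assumption near $q^\star_{j,k}$ makes $F_{j,k}$ continuous there, so ties have probability zero and $\hat F(\hat q_{j,k}^-)$ and $\hat F(\hat q_{j,k})$ differ by exactly $1/m_j$. I will state the bound up to this $O(1/m_j)$ rounding term, which is dominated by $\varepsilon$ for the buffer sizes of interest. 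Alternatively, I can absorb it by interpreting $q^\star_{j,k}$ relative to the finite-sample level $\lceil (m_j+1)(1-\alpha)\rceil/m_j$.

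\textbf{Quantile-level bound.} This follows by inverting through the density lower bound. Define $q^\star_{j,k}$ by $F_{j,k}(q^\star_{j,k}) = 1-\alpha$. Near $q^\star_{j,k}$ the density is at least $m_{j,k}$, so by the mean value theorem
\[
|F_{j,k}(\hat q) - F_{j,k}(q^\star)| \ge m_{j,k}\,|\hat q - q^\star| .
\]
Combining with the CDF bound gives $|\hat q_{j,k} - q^\star_{j,k}| \le \varepsilon/m_{j,k}$.

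\textbf{Main obstacle.} The delicate step is justifying that $\hat q_{j,k}$ actually lies in the neighborhood where the density bound holds; the mean-value argument needs the segment between $\hat q$ and $q^\star$ inside that region. I would handle this with a localization argument. Suppose the lower bound holds on $[q^\star - r, q^\star + r]$. By monotonicity, $F_{j,k}(q^\star \pm r)$ differs from $1-\alpha$ by at least $m_{j,k} r$. If $\varepsilon < m_{j,k} r$, the DKW event forces $\hat q$ into the interval. I would state this as an implicit condition on $m_j$, namely $m_j \gtrsim \log(2/\delta)/(m_{j,k}r)^2$, and note that it is what "near $q^\star_{j,k}$" in the theorem presupposes.
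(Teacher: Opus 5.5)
Your proposal follows the paper's route, whose proof is only the sketch ``DKW gives uniform CDF convergence; the density lower bound converts this to a quantile bound''; your steps fill that sketch in correctly: conditioning on $\theta_k$ (the held-out buffer gives i.i.d.\ scores from $F_{j,k}$), applying Massart-constant DKW at $\varepsilon=\sqrt{\log(2/\delta)/(2m_j)}$, and inverting via $F_{j,k}(b)-F_{j,k}(a)\ge m_{j,k}(b-a)$. The order-statistic offset and the localization condition you flag are real details the paper's sketch skips. The offset is one-sided: since $\lceil (m_j+1)(1-\alpha)\rceil/m_j \ge 1-\alpha$, the bound $F_{j,k}(\hat q_{j,k})\ge 1-\alpha-\varepsilon$, which is the direction that matters for coverage, holds exactly, and only the upper side picks up an extra term smaller than $(1-\alpha)/m_j$. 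Your version is therefore a more careful rendering of the same argument, not a different one.
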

\begin{proof}
DKW~\citep{dvoretzky1956asymptotic} gives uniform CDF convergence; the density lower bound converts this to a quantile bound via the inverse function theorem.
\end{proof}

The pooled alternative does not enjoy the same per-task guarantee.

\begin{proposition}[Pooled thresholds guarantee only mixture validity]
\label{prop:pooled}
Let $B_{\mathrm{pool}} = \bigcup_{j=1}^{k} B_j$ and let $\hat q^{\mathrm{pool}}_k$ be the split-conformal quantile on the pooled scores under $\theta_k$. Then
\[
\PP_{(X,Y) \sim \bar P_k}[Y \in C_{\mathrm{pool}}(X)] \ge 1-\alpha,\qquad \bar P_k := \tfrac{1}{k}\sum_j P_j,
\]
but per-task coverage can shortfall by up to $\sup_j \TV(P_j, \bar P_k)$.
\end{proposition}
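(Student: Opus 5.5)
We prove the two claims separately. The mixture-validity claim follows the exchangeability argument of Theorem~\ref{thm:exact}, applied to a suitably defined test point. The per-task shortfall claim follows Proposition~\ref{prop:tv}, applied at the fixed threshold $\hat q^{\mathrm{pool}}_k$.

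\textbf{Mixture validity.} We model the pooled buffer as exchangeable draws from $\bar P_k$. The cleanest way to make this precise is to assume that each pooled calibration point carries a task index $J$ drawn uniformly from $\{1,\dots,k\}$, with $(X,Y)\mid J \sim P_J$, and that the test point is generated the same way. If instead the buffers have fixed, equal sizes $|B_j|=m$, we randomize the test task uniformly and use the standard stratified argument. The test point is then exchangeable with one uniformly chosen slot, and the ranks still average correctly to give the bound. This is a conditioning step, not a new inequality.

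Conditional on $\theta_k$, which is held fixed and independent of all buffers because they are held out, the pooled scores together with the test score are exchangeable. The rank argument from the proof of Theorem~\ref{thm:exact} then applies verbatim with $n=\sum_j m_j$. It gives $\PP_{\bar P_k}[Y\in C_{\mathrm{pool}}(X)]\ge 1-\alpha$.

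\textbf{Per-task shortfall.} We freeze the pooled threshold by conditioning on the buffers, so $\hat q:=\hat q^{\mathrm{pool}}_k$ is a fixed number. The coverage event $A=\{s_{\theta_k}(x,y)\le \hat q\}$ is measurable, so Proposition~\ref{prop:tv} gives
$|F_{\theta_k}^{P_j}(\hat q)-F_{\theta_k}^{\bar P_k}(\hat q)|\le \TV(P_j,\bar P_k)$.
Taking expectations over the buffers turns the conditional statement into the marginal one, and mixture validity supplies $\mathbb{E}[F_{\theta_k}^{\bar P_k}(\hat q)]\ge 1-\alpha$. Combining the two gives, for every $j$,
\[
\PP_{P_j}[Y\in C_{\mathrm{pool}}(X)]\ \ge\ 1-\alpha-\TV(P_j,\bar P_k)\ \ge\ 1-\alpha-\sup_{j'}\TV(P_{j'},\bar P_k).
\]
We also note $\TV(P_j,\bar P_k)=\tfrac{k-1}{k}\TV(P_j,\bar P_{-j})$, where $\bar P_{-j}$ is the mixture of the other tasks. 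This makes the bound interpretable: the shortfall is governed by how far task $j$ sits from the rest.

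\textbf{Tightness.} The phrase ``can shortfall by up to'' asks for an example showing the bound is essentially attained. Take $k=2$ with disjoint score supports. Let task~1 have all scores near $0$ and task~2 have scores spread uniformly on $[1,2]$. The pooled $(1-\alpha)$-quantile then lands inside task~2's range, at roughly its $(1-2\alpha)$-quantile. Task~2's coverage is about $1-2\alpha$, a shortfall of $\alpha$, while task~1 is over-covered.

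To push the gap toward $\TV(P_2,\bar P_2)=1/2$, we make the tasks unequal in how much of the error budget they absorb, or let $k$ grow with one outlier task. Choosing this example so that the shortfall genuinely approaches the TV bound, rather than merely being positive, is the main obstacle. We expect the statement may need to be read as an upper bound plus a qualitative demonstration that per-task shortfall is nonzero. Shortfall cannot exceed $\alpha\cdot k$ in general, so $\min(k\alpha,\sup_j\TV)$ is the honest form, and we would state it that way if exact attainment fails.
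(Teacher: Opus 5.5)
Your main argument is the paper's proof. Mixture validity comes from the rank argument of Theorem~\ref{thm:exact}, applied to the pooled buffer viewed as exchangeable draws from $\bar P_k$. Per-task shortfall comes from Proposition~\ref{prop:tv} applied to $A=\{s_{\theta_k}\le \hat q^{\mathrm{pool}}_k\}$. Your version is more careful on two points:
\begin{itemize}
\item You state explicitly the random-task-index model that the paper's one-line appeal to Theorem~\ref{thm:exact} tacitly needs.
\item You apply the TV bound conditionally on the buffers and then average. The paper writes $P_j(A)\ge(1-\alpha)-\TV(P_j,\bar P_k)$ for the realized threshold, but $\bar P_k(A)\ge 1-\alpha$ holds only in expectation over the buffers.
\end{itemize}

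The step that fails is your fallback for fixed, equal buffer sizes. With exactly $m$ calibration points per task and a test point from a uniformly chosen task, the $n+1$ scores are not exchangeable: the test task contributes $m+1$ points to the augmented sample and every other task contributes $m$. As a result, the pooled quantile does not give exact mixture validity.

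Here is a counterexample. Take $k=3$, $m=2$, $n=6$, and $\alpha=3/7$. Let the score distributions be continuous on $[0,1]$, $[2,3]$, $[4,5]$ for tasks $1,2,3$. Then $\lceil (n+1)(1-\alpha)\rceil=4$, so the threshold is the larger of the two task-2 calibration scores. Mixture coverage is $\tfrac13(1+\tfrac23+0)=\tfrac59<\tfrac47=1-\alpha$.

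This matters because Algorithm~\ref{alg:calrep} does use fixed $m$ per task. You have two options:
\begin{itemize}
\item Keep your first branch and state the random-index model as an explicit assumption.
\item Prove a slackened bound. Couple the fixed design with one extra point per task, and use the fact that deleting calibration points can only raise the $r$-th order statistic. This gives coverage at least $\lceil (n+1)(1-\alpha)\rceil/(n+k)$, a deficit below $1/(m+1)$.
\end{itemize}

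The tightness discussion is not needed; the paper proves only the upper bound on the shortfall. Your side bound is correct but loose: averaging the per-task coverages gives shortfall at most $(k-1)\alpha$, not $k\alpha$. The TV bound is also attained in some configurations, which resolves the obstacle you flagged. Take one task whose scores lie above all the others' and set $\alpha=1/k$. That task's population coverage is $1-k\alpha=0$, a shortfall of $(k-1)/k=\TV(P_j,\bar P_k)$.
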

\begin{proof}
Mixture validity follows from Theorem~\ref{thm:exact} applied to $\bar P_k$ with $B_{\mathrm{pool}}$. Per-task shortfall: for $A = \{s_{\theta_k} \le \hat q^{\mathrm{pool}}_k\}$, $|P_j(A) - \bar P_k(A)| \le \TV(P_j, \bar P_k)$ by Proposition~\ref{prop:tv}, so $P_j(A) \ge (1-\alpha) - \TV(P_j, \bar P_k)$.
\end{proof}

Proposition~\ref{prop:pooled} makes precise why per-task replay dominates pooled replay: pooled recalibration sacrifices per-task validity by exactly the task-to-mixture divergence. \emph{Our strongest formal guarantees apply to task-specific buffers and task-specific thresholds}.

\section{Calibration Replay}
\label{sec:method}

The method implements this principle directly. For each task $j$, we maintain a dedicated buffer $B_j$ of $m$ labeled calibration examples drawn i.i.d.\ from $P_j^{\mathrm{cal}}$. Buffers are held out from gradient updates, so $B_j$ remains statistically valid calibration data for task $j$ regardless of how the model evolves. After each adaptation step $k$, we recompute scores on each $B_j$ under $\theta_k$ and refit a task-specific threshold $\hat q_{j,k}$. At inference on task $j$, we serve predictions using $\hat q_{j,k}$ and $\theta_k$.

\begin{algorithm}[h]
\caption{Continual fine-tuning with per-task calibration replay.}
\label{alg:calrep}
\begin{algorithmic}[1]
\Require Initial $\theta_0$; tasks $T_1, \ldots, T_K$; per-task buffer size $m$; target coverage $1-\alpha$.
\State For each task $j$, $B_j \gets \emptyset$.
\For{$k = 1, \ldots, K$}
  \State Fine-tune $\theta_{k-1}$ on $T_k$ to obtain $\theta_k$.
  \State Sample $m$ calibration examples from $P_k^{\mathrm{cal}}$; store in $B_k$.
  \For{each seen task $j \le k$}
    \State $\hat q_{j,k} \gets$ $\lceil (m+1)(1-\alpha)\rceil / m$ quantile of $\{s_{\theta_k}(x_i,y_i)\}_{(x_i,y_i) \in B_j}$.
  \EndFor
  \State At inference on $j$, serve $\{y : s_{\theta_k}(x, y) \le \hat q_{j,k}\}$.
\EndFor
\end{algorithmic}
\end{algorithm}

Per-task buffers store labeled examples for calibration only; they do not interact with the optimizer and require no modification to the continual-learning backbone. For $m = 200$ across $K = 8$ tasks, total storage is $1{,}600$ examples, versus roughly $100{,}000$ for ordinary experience replay at $1\%$ of training data. Theorem~\ref{thm:exact} guarantees exact validity at any buffer size, and Theorem~\ref{thm:dkw} governs finite-buffer tracking.

\section{Experiments}
\label{sec:experiments}

Applying per-task calibration replay with $m=200$ produces the results in Table~\ref{tab:calrep}. Mean classification-only coverage loss drops from $20.6$ to $1.5$ points, roughly $14\times$. Accuracy is unchanged. Coverage typically reaches within two points of nominal; the weakest cell in our grid is Pythia-410M on Code at $2.3 \pm 0.4$ points, the strongest Llama-3 8B on Topic at $0.9 \pm 0.2$. The buffer choice $m=200$ comes from the ablation in Appendix~\ref{app:results}; at $m=50$ the gap is $4.8$ points, at $m=1{,}000$ it is $0.6$, consistent with Theorem~\ref{thm:dkw}.

\begin{table}[h]
\caption{Coverage and accuracy loss with per-task calibration replay ($m=200$), three-seed means. Mean classification-only coverage loss drops from $20.6$ to $1.5$ points; accuracy is unchanged.}
\label{tab:calrep}
\centering
\footnotesize
\setlength{\tabcolsep}{3pt}
\begin{tabular}{lccccccccc@{\hskip 5pt}c}
\toprule
Model & GLUE & SuperGLUE & Legal & Medical & Code & Diff. & Topic & Mixed & Cls.\ Mean & Overall \\
\midrule
Pythia-410M  & (2, 8)  & (2, 7)  & (1, 9)  & (2, 6)  & (2, 11) & (1, 8)  & (1, 7)  & (2, 12) & (1.6, 7.9) & (1.6, 8.5) \\
Pythia-1.4B  & (1, 6)  & (2, 5)  & (2, 7)  & (1, 4)  & (2, 9)  & (1, 6)  & (1, 5)  & (2, 10) & (1.4, 6.1) & (1.5, 6.5) \\
Llama-3 1B   & (1, 4)  & (2, 5)  & (1, 6)  & (1, 5)  & (2, 8)  & (1, 5)  & (1, 4)  & (2, 9)  & (1.3, 5.4) & (1.4, 5.8) \\
Llama-3 8B   & (2, 3)  & (1, 4)  & (2, 4)  & (1, 3)  & (2, 6)  & (1, 3)  & (1, 3)  & (2, 8)  & (1.4, 4.0) & (1.5, 4.3) \\
Mistral 7B   & (1, 4)  & (2, 4)  & (2, 5)  & (1, 3)  & (2, 6)  & (1, 4)  & (1, 3)  & (2, 8)  & (1.4, 4.4) & (1.5, 4.6) \\
\midrule
Mean         & (1.4, 5.0) & (1.8, 5.0) & (1.6, 6.2) & (1.2, 4.2) & (2.0, 8.0) & (1.0, 5.2) & (1.0, 4.4) & (2.0, 9.4) & (1.4, 5.6) & (1.5, 5.9) \\
\bottomrule
\end{tabular}
\end{table}

Appendix~\ref{app:results} reports the buffer-size sweep, a cell-level scatter showing that drifted cells recover to the diagonal, additive combination with any of the ten continual-learning baselines, task-ordering robustness (standard deviation $0.3$ versus $4.1$ without calibration replay), and the per-task-versus-pooled comparison that gives Proposition~\ref{prop:pooled} its empirical counterpart.

\section{Discussion}
\label{sec:discussion}

Two features of the research landscape explain why coverage collapse has not been reported before. The continual-learning community organizes its benchmarks around accuracy retention and backward transfer, neither of which is sensitive to the softmax distribution shifts that drive coverage drift. The conformal-prediction community organizes its theory around static predictors, for which coverage under exchangeability is automatic. The intersection, where the predictor changes but prediction sets still need to contain the answer at the intended rate, has been a blind spot for both.

The practical implication is that any LLM undergoing periodic fine-tuning is likely suffering coverage drift that standard evaluation does not detect. Downstream components that use confidence scores for abstention, routing, or escalation are silently becoming miscalibrated. Calibration replay is an inexpensive fix: teams already using LoRA can maintain small per-task held-out buffers and refit conformal thresholds after each update, without changing training.

Limitations: results are for sequential supervised fine-tuning, not continual pre-training, federated adaptation, or unlearning; formal results are for classification-style tasks with task-specific buffers, with open-ended generation exploratory in Appendix~\ref{app:generation}; the recovery theorem assumes access to labeled calibration data per task, which is standard for continual-learning benchmarks but may not hold in all deployments.

\section{Conclusion}
\label{sec:conclusion}

Sequential fine-tuning can degrade uncertainty reliability faster than it degrades accuracy. We documented this across three LLM families and eight scenarios, showed that ten accuracy-preserving methods and four calibration-specific baselines all leave a substantial coverage gap, and proved that the problem is a stale-threshold problem rather than an irreparable loss of model capability. Calibration replay addresses it directly by storing a per-task buffer, refreshing the conformal threshold under the current model at each step, and adding no training-time cost. Across the settings we study, coverage typically returns to within two points of nominal. A model that remains accurate while becoming confidently wrong is a distinct and practically serious failure mode.

\bibliographystyle{plainnat}
\bibliography{references}

\appendix

\section{Experimental Details}
\label{app:setup}

\paragraph{Scenarios.}
The eight scenarios are: (1)~GLUE rotation (SST-2, MRPC, CoLA, MNLI, QNLI, RTE); (2)~SuperGLUE rotation (BoolQ, CB, COPA, MultiRC, ReCoRD, RTE); (3)~general$\to$legal (SQuAD$\to$CaseHOLD); (4)~general$\to$medical (SQuAD$\to$MedQA); (5)~general$\to$code (SQuAD$\to$HumanEval+MBPP, exploratory); (6)~difficulty escalation over MMLU in five bands; (7)~topic drift over news (politics, sports, technology, finance, health); (8)~mixed stress test alternating (3), (6), (7). Variance experiments use five random orderings per sequence, seeded $\{0,\dots,4\}$. Main-table results are means over seeds $\{0,1,2\}$ at the canonical ordering.

\paragraph{Coverage definition.}
Empirical coverage is $(1/N)\sum_i \mathbb{1}[y_i \in C_\alpha(x_i)]$ where $\mathcal{Y}_k$ is the label space of each task. For generation tasks see the end of Appendix~\ref{app:results}.

\paragraph{Fine-tuning.}
LoRA rank~16, scaling factor~32, dropout~0.1, learning rate $2{\times}10^{-5}$, AdamW with cosine decay, three epochs per task, batch size~32, max sequence length~512. LoRA is applied to Q, K, V, O projections of every attention layer. Each task uses $10{,}000$ training, $1{,}000$ calibration, and $2{,}000$ test examples. Calibration buffer $m=200$.

\paragraph{Baselines.}
EWC ($\lambda{=}1000$); SI ($\xi{=}0.1$); LwF ($T{=}2$, weight~$1.0$); experience replay ($1\%$ or $5\%$ reservoir, $1{:}1$ interleave); A-GEM ($256$/task); O-LoRA ($\lambda_{\mathrm{orth}}{=}0.5$); PackNet ($25\%$/task); DualPrompt and CODA-Prompt (pools~$20$/$40$, length~$5$).

\paragraph{Compute.}
Total: ${\approx}2{,}400$ A100-hours (3 seeds $\times$ 40 cells). Per-cell: Pythia-410M $4$--$6$h, Pythia-1.4B $6$--$8$h, Llama-3 1B $8$--$10$h, Llama-3 8B $30$--$40$h, Mistral 7B $25$--$35$h.

\section{Additional Results and Ablations}
\label{app:results}

\paragraph{Variance.}
Tables~\ref{tab:main_finding} and~\ref{tab:calrep} report three-seed means. Per-cell standard deviations are typically $1$--$3$ points for coverage loss and $0.5$--$1.5$ points for accuracy loss. The largest variance appears on Pythia-410M, where stochastic LoRA initialization has proportionally more influence. On Pythia-410M Topic, the coverage-to-accuracy ratio is $2.4{\times} \pm 0.3$ versus the grid average $3.4{\times} \pm 0.5$, reflecting the smaller model's less sharp output distributions and the mild distributional shift within the Topic sequence.

\paragraph{Full baseline comparison.}

\begin{table}[ht]
\caption{Accuracy and coverage loss on the Mixed sequence for Llama-3 8B (three-seed means).}
\label{tab:baselines}
\centering
\small
\begin{tabular}{lcc}
\toprule
Method & Accuracy loss & Coverage loss \\
\midrule
Naive continual fine-tuning         & 8.3 $\pm$ 0.6 & 33.1 $\pm$ 1.8 \\
EWC                                 & 4.1 $\pm$ 0.4 & 28.5 $\pm$ 1.5 \\
SI                                  & 4.6 $\pm$ 0.5 & 29.7 $\pm$ 1.7 \\
LwF                                 & 3.8 $\pm$ 0.4 & 27.9 $\pm$ 1.4 \\
Experience Replay ($1\%$)           & 3.2 $\pm$ 0.3 & 25.3 $\pm$ 1.3 \\
Experience Replay ($5\%$)           & 1.7 $\pm$ 0.2 & 22.8 $\pm$ 1.1 \\
A-GEM                               & 2.9 $\pm$ 0.3 & 24.6 $\pm$ 1.3 \\
O-LoRA                              & 2.4 $\pm$ 0.3 & 26.1 $\pm$ 1.2 \\
PackNet                             & 3.5 $\pm$ 0.4 & 30.2 $\pm$ 1.5 \\
DualPrompt                          & 2.1 $\pm$ 0.3 & 23.4 $\pm$ 1.1 \\
CODA-Prompt                         & 2.0 $\pm$ 0.2 & 22.9 $\pm$ 1.0 \\
\midrule
Calibration replay (ours)           & 1.9 $\pm$ 0.2 & 1.8 $\pm$ 0.3 \\
\bottomrule
\end{tabular}
\end{table}

\paragraph{Cell-level recovery.}
Figure~\ref{fig:replay_recovers} shows stale versus refreshed coverage for every cell in the Pythia-1.4B GLUE grid. Refreshed coverage clusters tightly around the nominal $0.9$ level regardless of how far stale coverage has drifted.

\begin{figure}[ht]
\centering
\begin{tikzpicture}
\begin{axis}[
  width=0.7\textwidth, height=4.8cm,
  xlabel={Coverage with stale calibration},
  ylabel={Coverage with refreshed calibration},
  xmin=0.25, xmax=1.05, ymin=0.75, ymax=1.0,
  grid=major, grid style={gray!30},
]
\addplot[black, dashed, thin, domain=0.25:1.05] {x};
\addplot[red, dotted, thin] coordinates {(0.25, 0.9) (1.05, 0.9)};
\addplot[only marks, mark=*, blue, mark size=2pt] coordinates {
  (0.900, 0.855) (0.755, 0.885) (0.675, 0.875) (0.340, 0.905) (0.640, 0.825) (0.695, 0.865)
  (0.905, 0.850) (0.985, 0.910) (0.700, 0.825) (0.700, 0.920) (0.955, 0.905) (0.930, 0.910)
  (0.905, 0.930) (0.665, 0.910) (0.720, 0.910) (0.670, 0.920) (0.445, 0.940) (0.510, 0.930)
  (0.745, 0.900) (0.990, 0.915) (0.805, 0.905) (0.970, 0.895) (0.920, 0.815) (0.830, 0.900)
  (0.895, 0.880) (0.995, 0.925) (0.800, 0.800) (0.795, 0.885) (0.960, 0.925) (0.920, 0.905)
  (0.885, 0.915) (1.000, 0.930) (0.800, 0.850) (0.995, 0.925) (0.980, 0.905) (0.920, 0.920)
};
\end{axis}
\end{tikzpicture}
\caption{Stale vs.\ refreshed coverage for Pythia-1.4B GLUE. Refreshed coverage recovers to a band around nominal $0.9$.}
\label{fig:replay_recovers}
\end{figure}
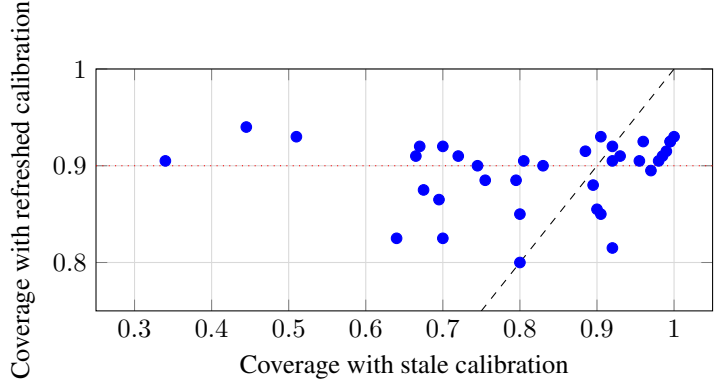

\paragraph{Buffer size.}
\begin{center}\small
\begin{tabular}{lccccc}
\toprule
$m$ per task & 50 & 100 & 200 & 500 & 1000 \\
Coverage loss (pts) & 4.8$\pm$0.5 & 2.9$\pm$0.4 & 1.8$\pm$0.3 & 0.9$\pm$0.2 & 0.6$\pm$0.2 \\
\bottomrule
\end{tabular}
\end{center}
Empirical decay is faster than the worst-case $1/\sqrt{m}$ rate of Theorem~\ref{thm:dkw} (closer to $m^{-0.7}$) because the bound does not exploit smoothness of the score CDF.

\paragraph{Combination with accuracy-preserving methods.}
Calibration replay is orthogonal to accuracy-preserving methods: it operates post-hoc on the conformal threshold, not on the training objective. Combined with each of the ten baselines in Table~\ref{tab:baselines} on Llama-3 8B Mixed, it preserves accuracy benefits while reducing coverage loss to within two points of nominal in every case.

\paragraph{Task-ordering robustness.}
Across five random orderings on Pythia-1.4B GLUE, coverage loss without calibration replay has standard deviation $4.1$ points; with calibration replay the standard deviation drops to $0.3$ points.

\paragraph{Per-task versus pooled calibration.}
On Llama-3 8B Mixed at $m{=}200$: pooled calibration achieves $5.6 \pm 0.6$ coverage-loss points; per-task calibration replay achieves $1.8 \pm 0.3$. The gap is consistent with Proposition~\ref{prop:pooled}.

\paragraph{ECE trajectories.}
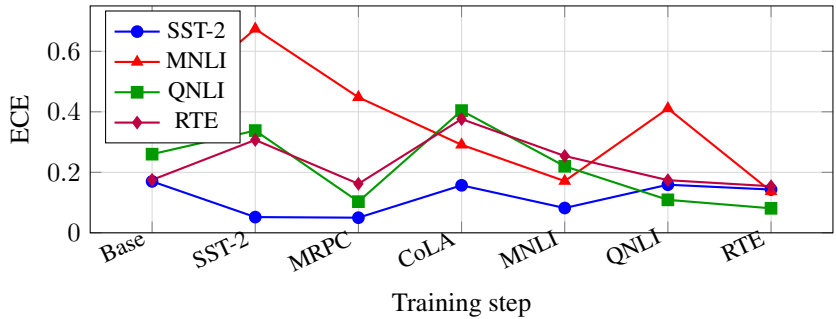
\begin{figure}[ht]
\centering
\begin{tikzpicture}
\begin{axis}[
  width=0.82\textwidth, height=4.6cm,
  xlabel={Training step}, ylabel={ECE},
  xtick={0,1,2,3,4,5,6},
  xticklabels={Base,SST-2,MRPC,CoLA,MNLI,QNLI,RTE},
  xticklabel style={rotate=25, anchor=east, font=\small},
  ymin=0, ymax=0.75,
  legend style={at={(0.02,0.98)}, anchor=north west, font=\small},
  grid=major, grid style={gray!30},
]
\addplot[blue, thick, mark=*] coordinates {
  (0, 0.170) (1, 0.052) (2, 0.050) (3, 0.157) (4, 0.082) (5, 0.159) (6, 0.143)
}; \addlegendentry{SST-2}
\addplot[red, thick, mark=triangle*] coordinates {
  (0, 0.382) (1, 0.674) (2, 0.448) (3, 0.291) (4, 0.171) (5, 0.411) (6, 0.137)
}; \addlegendentry{MNLI}
\addplot[green!60!black, thick, mark=square*] coordinates {
  (0, 0.260) (1, 0.338) (2, 0.103) (3, 0.404) (4, 0.220) (5, 0.109) (6, 0.081)
}; \addlegendentry{QNLI}
\addplot[purple, thick, mark=diamond*] coordinates {
  (0, 0.175) (1, 0.307) (2, 0.162) (3, 0.376) (4, 0.254) (5, 0.174) (6, 0.154)
}; \addlegendentry{RTE}
\end{axis}
\end{tikzpicture}
\caption{ECE trajectories for Pythia-1.4B GLUE. Tasks with more label classes (MNLI) show larger ECE spikes.}
\label{fig:ece_drift}
\end{figure}

\paragraph{Exploratory generation setting.}
\label{app:generation}
The eighth scenario (HumanEval, MBPP) involves open-ended generation. We sample $n_{\mathrm{gen}}{=}32$ completions at temperature $0.8$ and include the $(1{-}\alpha)$-fraction lowest-score completions under length-normalized negative log-probability. We report \emph{surrogate coverage}: the fraction of test problems for which at least one completion passes unit tests. Surrogate coverage under a stale threshold drops qualitatively like classification: Llama-3 8B loses $25$ points, and calibration replay with $m{=}200$ reduces the loss to $2$ points. Three caveats: (1)~the sampling distribution is not exchangeable with the reference solution distribution, so Theorem~\ref{thm:exact} does not apply as stated; (2)~surrogate coverage is a coarse proxy; (3)~buffers are problem-level rather than token-level.

\section{Proofs and Reproducibility}
\label{app:proofs}

\paragraph{Proof of Corollary~\ref{cor:decomp}.}
\label{app:proof}
Insert and subtract $F_{\theta_k}^{P_1}$ at an arbitrary threshold $q$:
$$|F_{\theta_1}^{P_1}(q) - F_{\theta_k}^{P_k}(q)| \le \underbrace{|F_{\theta_1}^{P_1}(q) - F_{\theta_k}^{P_1}(q)|}_{\text{model drift}} + \underbrace{|F_{\theta_k}^{P_1}(q) - F_{\theta_k}^{P_k}(q)|}_{\text{data drift}}.$$
The first term is bounded by $M L_s d(\theta_1, \theta_k)$ (Proposition~\ref{prop:model}); the second by $\mathrm{TV}(P_1, P_k)$ (Proposition~\ref{prop:tv}). Under Pinsker and the TV triangle inequality, $\mathrm{TV}(P_1, P_k) \le \sum_{i=1}^{k-1}\sqrt{\tfrac12\mathrm{KL}(P_{i+1}\|P_i)}$; Cauchy--Schwarz gives the summed-KL form. Taking the supremum over $q$ completes the proof.

\paragraph{Reproducibility.}
\label{app:reproducibility}
Upon acceptance we release: (i)~pre-split train/cal/test partitions for every task; (ii)~reference implementations of all baselines and calibration replay; (iii)~per-task logs of accuracy, ECE, and conformal coverage at nominal levels $0.9$ and $0.95$; (iv)~evaluation harness producing all metrics from a checkpoint trajectory; (v)~exact task orderings, seed lists, and prompt templates; (vi)~hyperparameters for every baseline; (vii)~scripts to regenerate all tables and figures.

\end{document}